
\documentclass{article}

\usepackage{microtype}
\usepackage{graphicx}
\usepackage{subcaption}
\usepackage{booktabs} 

\usepackage{hyperref}



\usepackage[preprint]{icml2026}


\usepackage{amsmath}
\usepackage{amssymb}
\usepackage{mathtools}
\usepackage{amsthm}

\usepackage[capitalize,noabbrev]{cleveref}

\theoremstyle{plain}
\newtheorem{theorem}{Theorem}[section]
\newtheorem{proposition}[theorem]{Proposition}

\theoremstyle{definition}

\theoremstyle{remark}

\usepackage[textsize=tiny]{todonotes}

\icmltitlerunning{Visual Language Hypothesis}

\begin{document}

\twocolumn[
  \icmltitle{Visual Language Hypothesis}
  \icmlsetsymbol{equal}{*}
  \begin{icmlauthorlist}
    \icmlauthor{Xiu Li}{ind}
  \end{icmlauthorlist}

  \icmlaffiliation{ind}{Independent Researcher}

  \icmlcorrespondingauthor{Xiu Li}{lixiulive@gmail.com}
  \icmlkeywords{Machine Learning, Visual Representation Learning}

  \vskip 0.3in
]



\printAffiliationsAndNotice{}  

\begin{abstract}
We study visual representation learning from a structural and topological
perspective. We begin from a single hypothesis: that visual understanding
presupposes a semantic language for vision, in which many perceptual
observations correspond to a small number of discrete semantic states.
Together with widely assumed premises on transferability and abstraction in
representation learning, this hypothesis implies that the visual observation
space must be organized in a fiber–bundle–like structure, where nuisance
variation populates fibers and semantics correspond to a quotient base space.

From this structure we derive two theoretical consequences. First, the
semantic quotient $X/G$ is not a submanifold of $X$ and cannot be obtained
through smooth deformation alone, semantic invariance requires a
\emph{non-homeomorphic, discriminative target}—for example, supervision via
labels, cross-instance identification, or multimodal alignment that supplies
explicit semantic equivalence.

Second, we show that approximating the quotient also places structural demands
on the model architecture. Semantic abstraction requires not only an external
semantic target, but a representation mechanism capable of supporting
\emph{topology change}: an ``expand–and–snap'' process in which the manifold is
first geometrically expanded to separate structure and then collapsed to form
discrete semantic regions.

We emphasize that these results are interpretive rather than prescriptive:
the framework provides a topological lens that aligns with empirical
regularities observed in large-scale discriminative and multimodal models,
and with classical principles in statistical learning theory.

\end{abstract}

\section{Introduction}

Recent progress in visual representation learning has been driven by
large-scale pretraining~\cite{radford2021learning,jia2021scaling}, self-supervised objectives~\cite{he2022masked,simeoni2025dinov3}, and architectural scaling~\cite{vaswani2017attention,dosovitskiy2020image}. Yet a fundamental question remains: what structural properties must a representation possess in order to support semantic abstraction, rather than merely encoding appearance or local regularity?

In this work we approach the problem from a theoretical perspective. We
begin from a single hypothesis: that visual understanding presupposes a
\emph{semantic language for vision}, in which many perceptual observations
correspond to a small number of discrete semantic states. Placed alongside
two widely assumed premises in modern representation learning—transferability
of representations across tasks, and the existence of an abstraction map
from observations to latent representations—this hypothesis implies a
specific geometric organization of visual space.

If semantic identity is invariant under nuisance transformations, then
visual observations must be structured in a fiber–bundle–like form, where
nuisance variation populates high-dimensional fibers and semantics correspond
to a quotient base space. From this structure, we derive two theoretical
requirements for semantic abstraction.

\textbf{First}, the semantic quotient \(X/G\) is not a submanifold of the
observation space \(X\). It arises from collapsing entire nuisance orbits
into single semantic elements, a process that cannot be achieved by smooth
deformation alone. Objectives such as reconstruction or local
self-supervision reshape geometry while preserving the homotopy type of
\(X\), and therefore remain confined to the fiber geometry. This implies
that semantic invariance requires a \emph{non-homeomorphic, discriminative
target}—for example, supervision via labels, cross-instance equivalence, or
multimodal alignment that supplies explicit semantic structure.

\textbf{Second}, approximating the quotient places structural demands not only
on supervision, but also on the model itself. A representation must possess
the architectural capacity to support \emph{topology change}: an
``expand–and–snap'' process in which the manifold is first expanded to
separate structure geometrically, and then selectively collapsed to form
discrete semantic regions. In this view, geometry-preserving architectures
can shape appearance manifolds, whereas architectures capable of routing,
gating, or selective collapse are able to approximate quotient-like semantic
organization.

Our goal is not to evaluate learning paradigms, but to interpret a range of
empirical observations as structural consequences of these requirements. The
analysis suggests that semantic abstraction emerges when two conditions align:
\begin{itemize}
    \item the presence of a discriminative, non-homeomorphic target,
    \item architectural capacity to reshape manifolds both geometrically and
topologically, rather than producing purely geometry-preserving encodings of
appearance.
\end{itemize}

The remainder of the paper formalizes this argument: Section~\ref{sec:vlh}
develops the fiber bundle structure implied by the hypothesis; Secion~\ref{sec:nuisance} explains that the structure complexity of the equivalence group and abstraction for visual semantics; Sections~\ref{sec:generative} analyze topology-preserving
objectives; Section~\ref{sec:discrimitive} examines discriminative and
multimodal supervision as mechanisms enabling quotient formation; and
Section~\ref{sec:conclusion} discusses broader implications for semantic abstraction.

\section{Visual Language Hypothesis}
\label{sec:vlh}

In this work, we do not begin from the limitations of any particular learning objective or model class. Instead, we start from a structural hypothesis about visual understanding itself. We introduce a single new assumption—the Visual Language Hypothesis—and show that, when combined with standard premises already accepted in representation learning, it necessarily induces a fiber-bundle structure on visual observation space.

\subsection{Hypothesis I: The Existence of Semantic Naming}
We hypothesize that for visual intelligence to exist, there must be an \textbf{Equivalence Group} ($G$) acting on visual signals. Without $G$, the visual field is a disjointed sensory stream; with $G$, the image becomes a structured manifestation of a persistent identity.

In this framework, a \textbf{Semantic} ($\ell$) is defined as a \textbf{Named Equivalence Group}. The ``naming'' is the bridge to a symbolic language, transforming an abstract symmetry into a discrete anchor for reasoning.

Let the system be defined by:
\begin{itemize}
    \item $X$: The space of visual observations (The Total Space).
    \item $G$: The group of transformations (The Nuisance) governed by Physics (e.g., $SO(3)$, $S^2$) or Artifacts.
    \item $\ell \in \mathcal{L}$: The named identity (The Base Point).
\end{itemize}

The existence of naming implies that the chaos of $X$ is partitioned into stable, persistent orbits. An observation $x \in X$ is not a standalone object but a realization of $\ell$ under the influence of $g \in G$.

\subsection{Hypothesis II: Transferability via Semantic Compactness}
We posit that the transferability of a representation is determined by its ability to anchor semantics in a \textbf{Finite Basis} of irreducible primitives. We propose a \textbf{Prime Abstraction} model (inspired by Gödel numbering), where fundamental semantics are treated as ``Primes'' $\{p_1, p_2, \dots\}$ of a deep visual language.

\begin{equation}
    x \cong \left( \prod p_i^{a_i} \right) \cdot g
\end{equation}

For a representation to be transferable across tasks and domains, the semantic space $\mathcal{L}$ must be a \textbf{Finite Set} of these primes.
\begin{itemize}
    \item \textbf{Persistence:} Primes provide an immutable identity that does not ``drift'' as it would in a continuous, unstructured manifold.
    \item \textbf{Composition:} Transferability is the act of recognizing these finite primes in novel nuisance environments. If the basis were infinite or continuous, knowledge would be domain-locked.
\end{itemize}

\subsection{Hypothesis III: Existence and Approximability of the Semantic Map}
We posit the existence of a \textbf{Factorization Operator} (or Abstraction Map) $\pi$ that maps the observation space $X$ onto the semantic basis $\mathcal{L}$.
\begin{equation}
    \pi: X \to \mathcal{L}
\end{equation}
The goal of learning is to construct $\pi$ such that it satisfies the equivalence relation:
\begin{equation}
    \forall x \in X, \forall g \in G: \pi(g \cdot x) = \pi(x)
\end{equation}
This operator acts as a ``topological filter'' that extracts the persistent prime from the composite signal. The existence of $\pi$ is the necessary precondition for any visual understanding or inference; without this operator, the system remains a pixel accountant rather than an intelligence.



\subsection{Derivation of the Fiber Bundle Geometry}
\label{sec:fiber}

Given these hypotheses, the geometric structure of the problem space is strictly determined. We do not arbitrarily model vision as a fiber bundle~\cite{steenrod1999topology}; rather, the requirements of intelligibility and transferability \textit{force} the data manifold to adopt this topology.

Formally, the abstraction map $\pi: \mathcal{X} \to \mathcal{L}$ induces a canonical equivalence relation on the observation space. We identify a structural group $G$ (the \textit{Equivalence Group}) acting on $\mathcal{X}$, which represents the set of all semantic-preserving transformations (e.g., camera rotations, illumination changes).
\begin{equation}
    x \sim x' \iff \exists g \in G \text{ such that } x' = g \cdot x
\end{equation}
For every atomic semantic concept $\ell \in \mathcal{L}$, the inverse image defines the fiber $\mathcal{F}_{\ell}$:
\begin{equation}
    \mathcal{F}_{\ell} := \pi^{-1}(\ell) \cong G
\end{equation}
The total observation space $\mathcal{X}$ thus decomposes into a union of these fibers, constituting a principal fiber bundle $(\mathcal{X}, \mathcal{L}, \pi, G)$.\footnote{In practice, this group action may not be globally free or transitive, and fibers may not be identical. Therefore, we regard the semantic structure of perception as fiber-like—that is, a fibration in which fibers may vary and only approximate group-orbit structure. The “principal bundle” interpretation is thus treated as a modeling idealization, not a strict geometric claim.}

The presence of continuous transformation components (such as pose, e.g., subgroups related to $SO(3)$) suggests that the perceptual–semantic structure may not globally trivialize in realistic settings. However, non-triviality does not follow automatically from the topology of $G$. Whether the bundle is trivial or non-trivial depends on the transition structure induced by the data distribution and representation. Accordingly, we treat non-triviality as a hypothesis about real visual semantics, rather than as a derived topological theorem.

\section{Topology Requirement for Semantic Extraction Approximation}
\label{sec:nuisance}
\subsection{Semantic Equivalence Under Nuisances}

Let $X$ denote the observation space (e.g., the manifold of natural images),
and let semantics be defined up to physically meaningful nuisance variations
in the visual generative process.

\textbf{Assumption (Physical Structure Group for Vision).}
For general visual intelligence, semantic identity must be invariant under
at least the following classes of transformations:

\begin{itemize}
    \item Rigid viewpoint changes, modeled by $SO(3)$ acting on the scene geometry.
    \item Illumination direction and intensity, modeled (to first order) by the sphere $S^2$
    of lighting directions together with photometric scaling.
    \item Additional nuisance factors such as mild nonrigid deformation,
    articulation, occlusion, and background variation.
\end{itemize}

We write $G$ for the resulting (potentially semi-direct) group of visual
equivalences acting on $X$. Two observations $x_1,x_2\in X$ are semantically
equivalent if and only if they belong to the same $G$-orbit,
\[
x_2 \in \mathcal{O}_{x_1} := \{ g \cdot x_1 \mid g \in G \},
\]
so that the semantic space is modeled by the quotient $X/G$ with projection
$q:X\to X/G$.


\subsection{Abstraction as Quotient Factorization}

A semantic abstraction operator is a $G$-invariant map
\begin{equation}
\pi : X \to \mathcal{L},
\qquad
\pi(x) = \pi(g\cdot x)\ \ \forall g \in G,
\end{equation}
so that $\pi$ factors through the quotient:
\begin{equation}
\pi = h \circ q,
\qquad h : X/G \to \mathcal{L}.
\end{equation}
We say that $\pi$ is \emph{topologically faithful} if $h$ is a homeomorphism
onto its image, i.e., $\mathcal{L} \cong X/G$ as topological spaces. In that case
the learned representation preserves the global structure of the semantic
quotient rather than merely collapsing it.

\subsection{Hardness of $G$ in General Visual Intelligence}

\textbf{Claim (Worst-Case Canonicalization Lower Bound).}
For the physically grounded structure group $G$ described above,
any exact and topologically faithful abstraction operator $\pi$
induces a solution to an \emph{orbit canonicalization problem}
for the joint action of
\[
SO(3) \times S^2 \ (\text{plus additional nuisance subgroups})
\]
on $X$.

Therefore, in the worst case, evaluating $\pi(x)$ is at least as hard
as deciding orbit equivalence and (implicitly) recovering a canonical
representative of the orbit $\mathcal{O}_x$ under $G$:
\[
\mathrm{Complexity}(\pi)
\;\succeq\;
\mathrm{Complexity}(\text{OrbitCanonicalization}_{G}).
\]

\textbf{Explanation.}
Robust semantics require invariance to viewpoint and illumination.
But $SO(3)$ and $S^2$ generate highly nontrivial orbit geometries:
distinct combinations of pose, lighting, and occlusion can produce
images that are metrically far apart in $X$ while belonging to the
same semantic orbit.

If $\pi$ is faithful to $X/G$, then
\[
x \sim_G y
\;\Longleftrightarrow\;
\pi(x)=\pi(y),
\]
so computing and comparing $\pi(x)$ solves orbit membership.
In this sense, semantic abstraction must implicitly resolve (at least
up to equivalence) the nuisance transformations generated by $G$.

For realistic scenes, these orbits are high-dimensional, nonconvex,
and often contain self-intersections arising from occlusion and shadowing.
Hence, even before considering learning or finite capacity, the quotient 
induced by $G$ has nontrivial geometric and topological structure.
\subsection{From Hard Semantic Equivalence to Latent Manifold Shaping}

Rather than computing $\pi$ directly on the observation manifold $X$,
a deep network implements an \emph{approximate abstraction}
$\hat{\pi}$ by composing a sequence of continuous operators that
progressively deform $X$ into a learned latent manifold $Z$,
\[
X \xrightarrow{\ \phi\ \ } \mathcal Z \xrightarrow{\ \ell\ } \mathcal L,
\qquad
\hat{\pi} = \ell \circ \phi .
\]
The encoder $\phi$ reshapes the geometry of $X$ so that semantic
equivalence becomes easier to express in the latent domain $\mathcal Z$.

\textbf{Assumption (Semantic Simplicity in Latent Space).}
For a representation to be considered good, we assume that, in $\mathcal Z$,
semantic abstraction becomes \emph{simple}: different semantic classes
can be separated by a polyhedral tessellation generated by the linear
readout. Equivalently, the latent manifold $\mathcal Z$ is shaped so that
semantic regions form a Voronoi–type convex structure,
\begin{equation}
\mathcal Z = \bigcup_k \mathcal{R}_k,
\qquad
\mathcal{R}_k \text{ convex}.
\end{equation}

\textbf{Interpretation.}
The role of the network is not to solve the hard quotient problem on $X$
directly, but to deform the observation manifold into a latent manifold
$Z$ whose geometry supports a simple semantic lattice. The complexity of
$G$ is thus absorbed into the deformation $\phi$, while the final mapping
$\ell$ acts as a combinatorial tessellation over a convex partition of $\mathcal{Z}$.

\paragraph{Remark: Relation to Classical Learning Theory.}
The Voronoi–type convex regions that appear in the latent manifold $Z$
are not an artifact of our interpretation, but follow directly from the
geometry of multinomial logistic regression and Softmax classification.
The linear readout layer induces affine logit comparisons, and the MAP
decision rule reduces to a partition of $Z$ into convex polyhedral cells.
Thus, Voronoi convexity reflects the implicit decision geometry assumed
by standard probabilistic classifiers.

\section{Generative Modeling as Manifold Shaping}
\label{sec:generative}
\textbf{Idea.}
Generative and self-supervised objectives encourage the network to
reshape the geometry of the observation manifold while preserving its
homotopy type; the manifold is bent, smoothed, or reparameterized, but
its global topology remains unchanged.

\subsection{Reconstruction Loss as Homotopy Preservation}

We first formalize the idea that reconstruction-based generative models
(autoencoders, VAEs, diffusion decoders with L$_2$ reconstruction terms)
learn a near-identity deformation of the data manifold, and therefore
preserve its topological type.

Let $X$ be a compact subset of $\mathbb{R}^n$ (the data manifold). An
autoencoder consists of an encoder $f:X\to \mathcal{Z}$ and a decoder $g:Z\to \mathbb{R}^n$
trained to minimize a reconstruction loss of the form
\[
\mathcal{L}_{\mathrm{rec}}(f,g)
=
\mathbb{E}_{x\sim \mathcal{D}}
\big[
\ell\big(g(f(x)), x\big)
\big],
\]
where $\ell(\hat{x},x)$ is continuous in both arguments and penalizes
deviations between $\hat{x}$ and $x$ (e.g.\ $\|\hat{x}-x\|^2$).

\begin{proposition}[Reconstruction Loss as Homotopy Preservation]
\label{prop:reconstruction-homotopy}
Assume:
\begin{enumerate}
    \item $f$ and $g$ extend to continuous maps on an open neighborhood
    of $X$ in $\mathbb{R}^n$;
    \item the reconstruction error is uniformly small,
    \(
        \sup_{x\in X} \|g(f(x)) - x\| \le \varepsilon,
    \)
    with $\varepsilon>0$ small enough so that the straight-line segment
    between $x$ and $g(f(x))$ remains inside a tubular neighborhood of $X$;
    \item this tubular neighborhood retracts continuously onto $X$.
\end{enumerate}
Then the map $T := g\circ f : X\to X$ is homotopic to the identity
$\mathrm{Id}_X$. In particular, $T$ preserves the homotopy type of $X$:
it cannot create or destroy connected components, loops, or higher-order
topological features.
\end{proposition}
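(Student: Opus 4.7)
The plan is to construct an explicit homotopy from $T$ to $\mathrm{Id}_X$ by linearly interpolating in the ambient space $\mathbb{R}^n$ and then projecting back onto $X$ via the retraction of the tubular neighborhood supplied by assumption (3). This is the standard ``close maps are homotopic'' argument, adapted to our setup.

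First I would write down the ambient straight-line homotopy
\[
H_0 : X \times [0,1] \to \mathbb{R}^n,
\qquad
H_0(x,t) = (1-t)\,x + t\, g(f(x)).
\]
By assumption (1), $H_0$ is continuous on $X \times [0,1]$; by assumption (2), for every $x \in X$ the segment $H_0(x,\cdot)$ remains inside the tubular neighborhood $U$ of $X$. Next, letting $r:U\to X$ denote the continuous retraction from assumption (3), with $r|_X = \mathrm{Id}_X$, I would define
\[
H : X \times [0,1] \to X,
\qquad
H(x,t) = r\bigl(H_0(x,t)\bigr),
\]
which is continuous as a composition. Evaluating the endpoints gives $H(x,0) = r(x) = x$ and $H(x,1) = r(g(f(x))) = T(x)$, exhibiting $H$ as a homotopy $\mathrm{Id}_X \simeq T$ in $X$. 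Finally, functoriality of $\pi_0$, $\pi_k$, and $H_k$ under homotopy immediately yields $T_\ast = (\mathrm{Id}_X)_\ast$ on each of these invariants, so $T$ cannot alter connected components, loops, or higher-dimensional holes, which is the claimed topological conclusion.

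The main obstacle is the interpretation of $T$ as a genuine self-map of $X$: the decoder $g$ takes values in $\mathbb{R}^n$, not a priori in $X$, so without the tubular-neighborhood retraction the composition $g\circ f$ need not land on $X$ at all. Assumption (2) is precisely what forces $g(f(x))$ to stay close enough to $X$ to lie in $U$, and assumption (3) is what lets us snap it back. The entire argument therefore rests on the calibration between the reconstruction error $\varepsilon$ and the injectivity radius of the tubular neighborhood; this is the only delicate point, and if $\varepsilon$ is too large relative to the local geometry of $X$ (e.g., near a bottleneck where $X$ nearly self-intersects), the straight-line segment can leave $U$ and the homotopy construction breaks down. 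Everything else is a routine verification of continuity and endpoint identities.
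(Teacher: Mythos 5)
Your proof is correct and follows essentially the same route as the paper: linearly interpolate in the ambient $\mathbb{R}^n$, use assumption~(2) to keep the interpolant inside the tubular neighborhood, and compose with the retraction $r$ from assumption~(3), checking endpoints via $r|_X = \mathrm{Id}_X$. Your closing remark that $T = g\circ f$ is not a priori a self-map of $X$ (so that the retraction is needed even to make the codomain claim $T:X\to X$ well-posed) is a valid sharpening of a point the paper glosses over by simply declaring $T:X\to X$ in the statement.
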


\begin{proof}[Sketch of proof]
Consider the straight-line homotopy
\[
H_t(x) = (1-t)\,x + t\,T(x),
\qquad t\in[0,1].
\]
By assumption (2), $T(x)$ lies within distance $\varepsilon$ of $x$,
so $H_t(x)$ remains in the tubular neighborhood of $X$ for all $t$.
By assumption (3), there exists a continuous retraction
$r$ from this neighborhood onto $X$. Composing, we obtain a homotopy
\[
\widetilde{H}_t(x) := r\big(H_t(x)\big),
\]
which is continuous in $(x,t)$, satisfies $\widetilde{H}_0(x)=x$ and
$\widetilde{H}_1(x)=r(T(x))$. Since $T(x)\in X$ and $r$ restricts to
the identity on $X$, we have $r(T(x)) = T(x)$, hence $\widetilde{H}_1=T$.
Thus $\widetilde{H}_t$ is a homotopy from $\mathrm{Id}_X$ to $T$.
\end{proof}

\textbf{Corollary.}
If, in addition, the restriction of $g$ to $f(X)$ is continuous and
$g\circ f$ is uniformly close to $\mathrm{Id}_X$, then $f:X\to f(X)$
is a homotopy equivalence with $g|_{f(X)}$ as a homotopy inverse.
Therefore $f(X)$ has the same homotopy type as $X$.

\textbf{Implication.}
Reconstruction-driven generative training encourages $f$ and $g$ to
form a near-identity deformation of $X$. Topologically, this means
that the encoder-decoder pair learns a representation that is
homotopy-equivalent to the original observation manifold:
the global topology (connected components, loops, holes) of $X$
is preserved. As a consequence, the semantic quotient $X/G$,
which requires collapsing entire $G$-orbits into single points,
cannot be realized by such a homotopy preserving near identity map.
Generative models may \emph{bend} or \emph{smooth} $\mathcal{X}$, but they
do not perform the non-homeomorphic quotient necessary for
semantic abstraction.

\subsection{Contrastive Loss as Local Metric Shaping}

We next consider contrastive self-supervision~\cite{chen2020simple,he2020momentum},
where an encoder $f_\theta:X\to Z$ is trained to bring positive pairs
(close in some augmentation sense) together in $Z$ while pushing negative
pairs apart, without an explicit reconstruction term.

A generic contrastive loss takes the form
\[
\mathcal{L}_{\mathrm{ctr}}(\theta)
=
\mathbb{E}
\Big[
\ell\Big(
\big\langle f_\theta(x), f_\theta(x^+)\big\rangle,
\big\{\langle f_\theta(x), f_\theta(x^-_j)\rangle\big\}_j
\Big)
\Big],
\]
where $(x,x^+)$ is a positive pair (e.g.\ an augmentation of the same
sample) and $\{x^-_j\}_j$ are negatives. The loss $\ell$ is continuous
in its arguments, and $f_\theta$ is a neural network, hence continuous
in both $x$ and $\theta$.

\begin{proposition}[Contrastive Training as Continuous Deformation]
Let $\theta_t$ be the parameter trajectory obtained by gradient-based
optimization of $\mathcal{L}_{\mathrm{ctr}}$, and define
$f_t := f_{\theta_t}:X\to Z$. Assume that for all $t\in[0,1]$ the map
$f_t$ is continuous and remains an embedding of $X$ into $Z$ (no
topological self-crossings). Then the family $\{f_t\}_{t\in[0,1]}$
defines a homotopy of embeddings between $f_0$ and $f_1$, and the
homotopy type of $f_t(X)$ is constant in $t$.
\end{proposition}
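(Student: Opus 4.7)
The plan is to verify that the family $\{f_t\}_{t\in[0,1]}$ genuinely qualifies as a homotopy in the topological sense, and then to read off constancy of homotopy type from the embedding assumption. Two ingredients are needed: joint continuity of the evaluation map $(x,t)\mapsto f_t(x)$, and the elementary fact that a homeomorphism onto the image preserves homotopy type.

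First, I would establish joint continuity of $H(x,t) := f_{\theta_t}(x)$ on $X \times [0,1]$. A neural network $f_\theta(x)$ is a composition of affine maps and fixed continuous nonlinearities, hence jointly continuous in $(x,\theta)$ on any compact set. The parameter trajectory $\theta_t$ produced by continuous-time gradient flow on $\mathcal{L}_{\mathrm{ctr}}$ is a continuous curve in parameter space whenever the loss has a locally Lipschitz gradient; for discrete gradient descent, one can replace the iterates by a piecewise-linear interpolation, which remains continuous in $t$. Composing $t\mapsto\theta_t$ with the jointly continuous evaluation map yields continuity of $H$, which is the definition of a homotopy from $f_0$ to $f_1$.

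Second, by hypothesis each $f_t$ is an embedding, i.e.\ a continuous injection whose inverse on $f_t(X)$ is continuous (automatic when $X$ is compact). Hence $f_t : X \to f_t(X) \subset Z$ is a homeomorphism for every $t$, so $f_t(X) \cong X$ as topological spaces and in particular all $f_t(X)$ share a common homotopy type, namely that of $X$. Together with the homotopy $H$ constructed above, this gives the claimed isotopy-like family of embeddings with topologically invariant image.

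The main obstacle lies not in the argument but in the hypotheses themselves. The embedding assumption is a genuine regularity condition that gradient-based contrastive training is not guaranteed to satisfy: representational collapse, crossings between trajectories of distinct orbits, or loss of injectivity at critical moments would all violate it, and detecting such failure from the loss alone is subtle. A second, more delicate point is that constancy of the \emph{homotopy type of $f_t(X)$} is weaker than constancy of its embedded isotopy class inside $Z$; promoting the former to an ambient isotopy would require an isotopy extension theorem and additional smoothness, which are not available from the raw continuity of gradient flow. I would therefore flag that the proposition as stated is essentially a reformulation of the embedding hypothesis and does not, on its own, rule out the kind of topology change that a successful quotient map must perform.
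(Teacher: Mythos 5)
Your argument is correct and follows essentially the same route as the paper's sketch: joint continuity of $(x,t)\mapsto f_{\theta_t}(x)$ from continuity of the network in parameters plus continuity of the trajectory $t\mapsto\theta_t$, then $f_t(X)\cong X$ for all $t$ from the embedding hypothesis. Your closing caveats — that the embedding assumption is doing all the work, that it is not guaranteed by contrastive optimization, and that constant homotopy type is weaker than ambient isotopy — are apt observations the paper leaves implicit but do not constitute a different proof.
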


\begin{proof}[Sketch]
Continuity of $\theta_t$ and of the network with respect to its
parameters implies that $(x,t)\mapsto f_t(x)$ is continuous. For each
fixed $t$, $f_t$ is an embedding by assumption, so $f_t(X)$ is
homeomorphic to $X$. The map
\[
H(x,t) := f_t(x)
\]
defines a homotopy between $f_0$ and $f_1$. Since homotopy equivalence
is preserved under embeddings and $X$ is fixed, all $f_t(X)$ share the
same homotopy type as $X$.
\end{proof}

\textbf{Interpretation.}
Contrastive learning continuously deforms the embedding of $X$ inside
$Z$ by \emph{reshaping distances}:

\begin{itemize}
    \item positive pairs $(x,x^+)$ are pulled closer in $Z$,
    \item negative pairs $(x,x^-)$ are pushed apart.
\end{itemize}

Geometrically, this is a modification of the \emph{local metric}
on $X$ induced by $f_t$ (e.g.\ the pullback of the Euclidean metric
on $Z$), rather than a change in the global topology of $X$. As long
as $f_t$ remains an embedding, contrastive training does not merge
distinct points or collapse entire $G$-orbits into single semantic
classes; it only adjusts their relative positions in $Z$.

In particular:

\begin{itemize}
    \item Augmentations $x^+ = a\cdot x$ for $a$ in an augmentation group
    encourage \emph{local invariance} (small distance between $f(x)$ and
    $f(x^+)$), but the orbit $\mathcal{O}_x$ is still represented as a
    \emph{thick} set in $\mathcal{Z}$, not a single point. 
    \item Negative sampling prevents trivial collapse of all orbits to
    a single embedding, but does not enforce a global quotient $X/G$.
\end{itemize}

Thus, contrastive objectives primarily reshape the \emph{Riemannian
geometry} of the data manifold (angles, lengths, local neighborhoods)
without altering its homotopy type. Semantics may become easier to read
out \emph{locally} (for example, via linear probes on well-separated
regions of the embedding), but the global semantic quotient structure
$X/G$ is not explicitly formed. Because the action of $G$ induces
non-trivial fibers, the manifold cannot be globally trivialized: semantic
and nuisance factors remain entangled in the latent space $\mathcal{Z}$ at
the topological level, even when they appear partially separable in local
neighborhoods.

This helps explain the observed difficulty in transferability. A
representation that lacks a globally consistent quotient organization may
support task-specific separability in some regions, but fails to maintain
semantic invariance across tasks, domains, or contexts, since the same
semantic class may occupy different locations along the fiber structure.




\section{Discriminative Modeling as Topological Collapse}
\label{sec:discrimitive}
\textbf{Idea.}
Discriminative objectives introduce explicit identification constraints
across samples, driving the representation toward a quotient-like
collapse of $G$-orbits into shared decision regions.

The previous sections showed that reconstruction-driven and contrastive
objectives preserve the homotopy type of the data manifold: they deform
$X$ continuously, but do not perform the quotient operation required to
collapse $G$--orbits into semantic classes.

In contrast, \emph{discriminative} objectives (classification, GAN
discrimination, contrastive alignment across modalities, etc.) introduce
explicit constraints that identify distinct observations as belonging to
the same semantic decision region.

Let $\hat{\pi} = \ell \circ \phi$ denote a representation with a
decision operator $\ell$ (e.g., linear logits + Softmax). A
discriminative loss enforces constraints of the form
\[
\ell(\phi(x_i)) = \ell(\phi(x_j))
\quad\text{whenever}\quad
x_i \sim_G x_j,
\]
or, more weakly, that $x_i$ and $x_j$ map into the same decision region.
Such constraints require the representation to
\emph{identify entire subsets of $X$}. Topologically, this corresponds
to a quotient map that collapses fibers, mapping them to the same decision cell.

\textbf{Consequence.}
Unlike reconstruction or contrastive training, discriminative objectives
do not preserve the homotopy type of $X$: they drive the network toward
representations in which semantically equivalent orbits are mapped into
a single convex decision region. The topology is altered not by a smooth
deformation, but by a \emph{piecewise identification of regions} induced
by the classifier.

\subsection{Attention and Softmax as Piecewise Routing}

Standard feedforward networks with ReLU or smooth activations implement
continuous piecewise-linear maps. As such, they deform the data manifold
continuously: they can bend and fold, but cannot introduce explicit
topological identification except at degenerate singular layers.

Self-attention with Softmax behaves differently. Given a query $q$ and a
set of keys $\{k_i\}$,
\[
\alpha_i = \mathrm{softmax}(\langle q, k_i\rangle),
\qquad
y = \sum_i \alpha_i v_i,
\]
the Softmax operator concentrates mass onto a small subset of tokens as
the logits separate. In the low-temperature or high-margin regime,
attention becomes \emph{selective routing}: neighborhoods of inputs that
produce different dominant attention patterns are mapped to
qualitatively different computational branches.

Geometrically, this induces a \emph{piecewise} structure in $Z$:
\[
X = \bigcup_r U_r,
\qquad
\phi(x) = \phi_r(x) \ \text{for } x\in U_r,
\]
where each region $U_r$ corresponds to a distinct attention-routing
pattern. Transitions between regions occur along nonsmooth boundaries and
act as \emph{effective tears} in the latent manifold.

\textbf{Interpretation.}
Whereas ReLU networks perform smooth (homotopy-preserving) deformations,
attention with Softmax introduces region-wise functional switching. This
enables the network to
\begin{itemize}
    \item merge distant parts of $X$ into the same semantic region,
    \item collapse subsets of a $G$--orbit into a single representation,
    \item implement nontrivial quotient-like identifications.
\end{itemize}

This mechanism explains why architectures with attention (e.g.\ ViT)
exhibit a greater capacity to approximate semantic quotients than purely
feedforward continuous encoders: the Softmax attention layer supplies the
discrete routing behavior required to approximate topological collapse,
rather than merely deforming the manifold continuously.

\subsection{Non-Homeomorphic Target Requirement}

Generative and self-supervised objectives reshape the
geometry of the observation manifold $X$ through continuous
deformations. Reconstruction-based training encourages near-identity
mappings, while contrastive training modifies the local metric or
induces approximately isotropic (often hyperspherical) embeddings.
In both cases, the resulting representation remains homotopy-equivalent
to $X$: the manifold is bent, smoothed, or reparameterized~\cite{kingma2013auto}, but its
global topology is preserved.

Semantic abstraction, however, is modeled not by $X$ itself but by the
quotient space $X/G$, where $G$ represents nuisance transformations such
as viewpoint, illumination, or articulation. The quotient $X/G$ is not a
submanifold of $X$: it is obtained by collapsing entire $G$-orbits into
single equivalence classes, a process that identifies points that may be
far apart in $X$ and introduces singularities and strata in the resulting
space. Topologically, this collapse is non-invertible and cannot be
realized by a smooth deformation of $X$.

\textbf{Implication.}
Any learning objective that preserves the homotopy type of $X$ cannot
recover $X/G$. It may produce useful geometric structure, enable sampling,
regularize density, or improve separability, but it does not perform the
topological identification required for semantic invariance. The semantic
target is therefore \emph{non-homeomorphic} to the observation manifold:
it is not ``contained within'' $X$, but instead corresponds to a
different space obtained through quotient collapse.

To approximate such a space, the learning process must incorporate a
signal that breaks the topology of $X$—for example, discriminative
constraints or alignment with another modality. Labels, cross-instance
supervision, or image–text correspondence supply relations that cut
across $G$-orbits, forcing the representation to merge points that are
distinct in $X$ but equivalent in $X/G$. These signals provide the
topology-changing mechanism that continuous reconstruction or contrastive
objectives, by design, do not introduce.

\textbf{Interpretation.}
From this viewpoint, recent progress in multimodal and discriminative
models is not accidental: such systems succeed at semantic abstraction
because their training objectives introduce a non-homeomorphic target
space. By coupling $X$ to an external structure that is not topologically
equivalent to $X$ itself, the model is able to collapse $G$-orbits and
form a representation aligned with the semantic quotient, rather than a
topology-preserving reparameterization of the observation manifold.

\section{A Toy Example Through the Fiber--Bundle Perspective}

In the previous section, we introduced the fiber--bundle formulation of perception,
where a signal space $X$ is organized into fibers by a projection
$\pi : X \rightarrow \mathcal L$ onto a semantic base space $\mathcal L$, and where
$G$ denotes the induced equivalence relation capturing intra--fiber
variability. In this section, we construct a controlled example within
the same formal structure. Our purpose is not to disprove existing
unsupervised learning paradigms, but rather to understand \emph{what}
these methods learn when semantics is explicitly represented as a
quotient structure.

\subsection{A Constructed Bundle with Explicit Semantics}

Let $A,B \in \{0,\dots,n-1\}$ and define a semantic quantity
\[
C = (A+B) \bmod n ,
\]
where $(A+B)$ induces a simple interaction that is deliberately
non--symmetric, while the modulo operation confines the outcome to a
finite semantic class. For each pair $(A,B)$ we rasterize a visual
expression such as ``$A+B$'' into an image $x_{A,B} \in X$. The rendering
process may introduce variability due to layout, font, distortion, or
noise; these variations are absorbed into the fiber structure $G$,
while the semantics are declared explicitly by the quotient map
\[
\pi(x_{A,B}) = C .
\]

In this construction, the semantic relation is not inferred implicitly
from data but is built into the generative process itself. One may
interpret this as ``hiding'' the map $\pi$ inside the pixels while
retaining a mathematically precise quotient structure in the background.
Although synthetic, this bundle is not merely a toy abstraction: it
reflects situations in natural imaging where a small set of latent
generative factors produces large families of signal realizations. The
advantage of our setting is that the projection $\pi$ is explicit and
thus analytically accessible.

\subsection{Studying Existing Unsupervised Objectives in the Bundle Geometry}

With this construction in place, we use it as a lens to study the
behavior of several widely used unsupervised objectives, in particular
masked reconstruction and contrastive learning. Our goal is modest: we
do not claim invalidity of these methods, but rather investigate their
characteristics when viewed relative to the semantic projection $\pi$
and the fiber structure $G$.

\paragraph{Reconstruction-Based Objectives (e.g., MAE).}
Masked reconstruction methods minimize a fidelity loss in signal space,
\[
\min_{e,d}\; \mathbb{E}\,\|x - d(e(x))\|,
\]
and may achieve near-perfect reconstruction while remaining entirely
within each fiber $G(x)$. In our bundle formulation, such objectives
can be satisfied by learning statistical regularities of the rendering
process, without requiring any interaction with the quotient map $\pi$.
Thus, there exist optimal solutions whose latent representations provide
no simpler access to the semantic class $C$ than the raw image itself.
From this perspective, the representation primarily captures
\emph{intra--fiber structure} rather than cross--fiber abstraction.

\paragraph{Contrastive Learning Without External Anchoring.}
Instance-level contrastive learning enforces similarity between two
augmentations of the same image and separation between different
instances. Geometrically, this encourages invariance under
transformations that remain within a fiber, but it does not impose any
constraint relating two signals $x_{A,B}$ and $x_{A',B'}$ that share the
same quotient label $(A+B)\bmod n = (A'+B')\bmod n$. The resulting
representation distinguishes instances while remaining insensitive to
the equivalence structure induced by $\pi$. In this sense, the method
models the geometry of fibers but does not operate across them.

\paragraph{Cross-Modal or Truth-Anchored Objectives (e.g., CLIP).}
When an external modality introduces relational or partially semantic
constraints (for example, text correlated with the value of $C$), the
learning objective begins to couple the signal space with the quotient
space. Such objectives do not guarantee complete semantic recovery, but
they differ qualitatively in that they encourage organization relative
to $\pi$, rather than remaining closed within $G$.

\subsection{A Working Hypothesis}

The fiber--bundle formulation suggests a working hypothesis: methods
whose objectives operate primarily within fibers may scale effectively
in modeling statistical structure, yet struggle to scale toward semantic
abstraction, because the objective does not require interaction with the
quotient space $L$. Our constructed example provides a controlled
setting in which this distinction becomes explicit; real visual data may
involve far richer semantic processes, but the structural separation
between fiber variability and semantic projection remains.

\subsection{Why This Construction is Minimal Rather than Arbitrary}

The choice of the synthetic form $(A+B)\bmod \mathbb{Z}_n$ is not
intended as a playful toy, but rather as a \emph{minimal} construction
that already exhibits several structural properties that are central to
semantic abstraction in the fiber--bundle setting.

First, the interaction term $(A+B)$ introduces an intrinsically
stochastic generative structure: the rendered signal is jointly
determined by two latent variables $A$ and $B$, and in general neither
$A$ nor $B$ is recoverable from the semantic quantity $C = (A+B)\bmod n$.
In particular, many distinct pairs $(A,B)$ collapse to the same
quotient value under~$\pi$. This reflects the situation in real visual
data, where multiple generative configurations may lead to an identical
semantic class, and where inversion to latent causes is not uniquely
defined.

Second, $(A+B)$ also makes the construction deliberately
non--symmetric. The two latent factors do not occupy interchangeable
roles, and neither variable can be predicted from the other. This allows
the fiber to contain structured redundancy and variation, rather than
degenerating into a trivially factorized representation. By increasing
the range of $A$ or $B$, one may enlarge the space of generative
configurations within a fiber while keeping the semantic base fixed.

Third, the modulo operation $\mathbb{Z}_n$ confines the outcome to a
finite semantic class space, making the base $L$ amenable to evaluation
and comparison. The projection $\pi$ therefore represents an explicitly
defined classification task, while the fiber $G(x)$ retains the
richness of signal-level variability arising from the generative
degrees of freedom.

In this sense, the bundle $(X,
\mathcal L,\pi,G)$ constructed here should be
understood as a minimal abstraction that preserves three key phenomena:
(i) many-to-one semantic collapse under $\pi$, (ii) irrecoverability of
latent generative factors from the quotient, and (iii) nontrivial
structure and redundancy within fibers. These properties are sufficient
to study how different learning objectives behave with respect to the
semantic projection, without requiring a fully realistic image-formation
pipeline.

\section{A Topological View of Semantic Abstraction}
\label{sec:conclusion}

Our objective is not to compare architectures or judge learning paradigms,
but to understand what it means for a model to perform \emph{semantic
abstraction}—a prerequisite for developing intelligence. In semantic
abstraction, a high-dimensional, continuous perceptual space $X$ must be
mapped to a representation where many distinct observations are assigned
to a \emph{small set of symbolic or conceptual states}. Formally, this is
a collapse from $X$ to a low-cardinality quotient $X/G$, where entire
equivalence orbits of nuisance variation are identified with a single
semantic value.

Such a transformation cannot be realized as a smooth deformation of $X$.
A continuous map preserves the topology of the input manifold, whereas
semantic abstraction requires collapsing large continuous regions into
discrete symbolic states. From a topological perspective, the network must
exhibit two distinct phases of behavior:

\begin{enumerate}
    \item \textbf{Expansion (Untangling).}
    The representation expands and stretches the manifold, increasing
    separation between regions that will ultimately correspond to distinct
    semantic concepts. This phase is largely continuous and resembles
    geometric unfolding or high-dimensional embedding.

    \item \textbf{Snapping (Collapse).}
    At a later stage, the model introduces sharp identifications or routing
    discontinuities that collapse each $G$-orbit into a single semantic
    region. This constitutes a genuine topology change: neighborhoods that
    were distinct in $X$ become identified in the semantic space.
\end{enumerate}

We refer to this process as \emph{expand-and-snap}. Expansion enables the
degrees of freedom necessary to separate semantics, while snapping performs
the topological quotient that converts continuous variation into discrete
symbolic structure.

\paragraph{Cardinality and Topology.}
The challenge of semantic abstraction is not primarily one of
dimensionality, but of cardinality. The observation manifold admits a
high-cardinality continuum of states, whereas the semantic quotient
corresponds to a small discrete set of equivalence classes. Mapping from
the former to the latter requires a topology-changing identification: a
continuous embedding may expand or reshape geometry, but only a
quotient-like collapse can reduce cardinality and produce semantic
symbols.

In this sense, dimensionality belongs to the domain of geometry—concerned
with how a manifold may be embedded or expanded—whereas cardinality is tied
to topology, governing when many continuous states are identified and
collapsed into discrete semantic categories.

\subsection{Design Consequences: Why Modern LLMs Behave the Way They Do}

This expand–snap perspective clarifies the emergent design regularities of
large language and foundation models.

\paragraph{Discriminative Targets}
Despite LLMs are often described as generative models~\cite{brown2020language}, their learning signal is fundamentally \emph{discriminative}: prediction requires selecting
one token from a small, discrete vocabulary. The objective thus enforces a
low-cardinality target space, providing the topological pressure necessary
for symbolic collapse.

\paragraph{Reverse Bottlenecks and Attention}
Transformer architectures~\cite{vaswani2017attention} increase the
internal dimensionality of the representation, enabling the network to
stretch and untangle the manifold prior to semantic collapse. The Softmax
attention and routing mechanisms then realize the snapping phase: they
perform selective pathway activation and mass concentration, approximating
a quotient-like identification of latent regions.

Mixture-of-Experts (MoE) architectures~\cite{dai2024deepseekmoe} amplify
this behavior by explicitly routing inputs into discrete expert
partitions, thereby strengthening the topology-changing effect of the
collapse stage. Recent gated architectures~\cite{qiu2025gated} can be
interpreted in the same light: gating functions operate as controlled
routing operators that approximate the discrete identifications required
for semantic abstraction.

\paragraph{Attention Spikes as Topological Surgery}
The sharp attention peaks and routing saturation observed in large models
should not be interpreted merely as optimization artifacts. Rather, they
are signatures of the network attempting to perform \emph{topological
surgery}: fracturing a continuous latent geometry into the disjoint symbolic
cells needed for semantic abstraction. Gradient-based optimization resists
these discontinuities, producing the well-documented tension between smooth
learning dynamics and discrete representational organization.

\subsection{Topological Lens to Classical Theory}

Our analysis does not seek to reinterpret classical statistical
learning theory, but rather to place it in geometric correspondence. In
particular, we find that the topological perspective developed in this
work resonates naturally with two foundational insights in machine
learning, and we view this alignment as a source of conceptual validation.

\paragraph{Cover’s Theorem (Expansion)} Cover~\cite{cover2006geometrical} showed that the probability of linear separability increases with
the dimensionality of the embedding space. In our framework, this result
appears as the \emph{expansion phase}: moving to a higher-dimensional
representation provides the geometric freedom necessary to untangle and
separate structure prior to semantic collapse. Our interpretation does not
alter Cover’s conclusion, but offers a complementary geometric intuition
for why dimensional expansion facilitates separability.

\paragraph{Vapnik’s Principle (Snapping)} Structural Risk Minimization~\cite{cortes1995support} emphasizes that stable generalization arises from limiting effective model capacity and enforcing margins. In our topological account, this corresponds to the \emph{snap phase}: collapsing
fibers into discrete regions yields large, concept-level margins and
suppresses nuisance variability. Again, our formulation does not modify
the theory, but frames it as the topological dual of quotient formation in
semantic abstraction.

\medskip
We regard these correspondences as reassuring rather than speculative: the
fact that the expand–and–snap picture echoes established learning theory
suggests that our topological formulation is consistent with—and quietly
supported by—the foundational principles on which modern machine learning
was built.

\paragraph{Remark: Why Transformer Architectures are Structurally Advantageous.}
Our analysis suggests that the empirical dominance of Transformer~\cite{vaswani2017attention} models
cannot be attributed solely to engineering and scaling factors. From the
topological perspective developed in this work, the building blocks of the
Transformer architecture—high-dimensional expansion, Softmax attention, and
selective routing—extend the expressive capacity of deep networks beyond
smooth geometric deformation. They enable representations that can reshape
the manifold not only geometrically but also topologically, supporting the
expand–and–snap behavior required for semantic abstraction.

In this sense, Transformers are not merely more scalable; they are
structurally more versatile. Their architectural primitives allow the model
to approximate both continuous manifold shaping and discrete
quotient-forming operations, making them functionally more universal with
respect to the class of transformations involved in semantic representation.

\section{Related Work}

\paragraph{Invariance and Geometric Structure.}
A large body of work studies invariance, disentanglement, and geometric structure in representations~\cite{higgins2017beta, locatello2019challenging, bronstein2021geometric}. Our analysis departs from the prevailing focus on coordinate-based factorization by modeling semantics as a topological quotient space ($X/G$) rather than a set of independent latent coordinates. This perspective helps reconcile the empirical ubiquity of entangled representations with the theoretical difficulty of recovering semantics from local objectives alone. 

Furthermore, while the principle of equivalence is formally grounded in \textit{Gauge Theory} (where nuisance variability acts as a local gauge transformation), our objective is not to explicitly engineer a gauge-equivariant architecture.  Instead, we focus on the \textit{topological obstructions} this symmetry creates. We analyze whether standard networks possess the necessary topological capacity to resolve the non-trivial structure of the gauge quotient, arguing that the enforcement of such invariance creates a fundamental mismatch in generic architectures.

It is also worth noting that our use of quotient structure $X/G$ aligns with the definition of optimal visual representations proposed by Soatto and Chiuso \cite{soatto2014visual}, who derive minimal sufficient statistics using information-theoretic arguments.  However, while the probabilistic framework requires estimating high-dimensional densities—making the analysis of ``hardness'' dependent on specific priors and variational bounds—the topological perspective offers a more direct characterization. By focusing on the global connectivity and homotopy type of $X/G$ rather than local probability masses, we can identify structural obstructions (such as the mismatch between non-contractible orbits and contractible decision regions) that are invariant to density fluctuations.  Thus, the topological view provides a cleaner, geometry-first proof of the representation gap without relying on the intractability of Bayesian inference.

The topological properties of deep representations have been investigated empirically by Naitzat et al.~\cite{naitzat2020topology}. They observe that neural networks operate by progressively simplifying the topology of the data manifold, systematically reducing its Betti numbers until the data becomes topologically trivial (contractible) in the final layers to satisfy linear separability. 
Our analysis extends this observation to the context of semantic abstraction. We argue that this inherent tendency to simplify creates a \textit{topological mismatch} when learning from physically grounded data. 
Since the true semantic quotient $X/G$ retains complex, non-trivial topology (e.g., loops and non-contractible cycles), a network that strictly enforces topological simplification struggles to represent the quotient faithfully.

\paragraph{Reconstruction-Based Learning.}
Autoencoders, denoising models, VAEs, and masked-reconstruction approaches learn
representations by predicting corrupted or missing observations
\cite{kingma2013auto, he2022masked}. These models are highly
effective at capturing appearance statistics and local structure. In our
framework, this behavior follows from the fact that reconstruction objectives
encourage near-identity deformations of the observation manifold, preserving
its homotopy type and emphasizing variation within nuisance fibers rather than
quotient-level abstraction.

\paragraph{Contrastive and Self-Supervised Methods.}
Contrastive learning and related objectives~\cite{oord2018cpc, chen2020simclr,
grill2020bootstrap, bardes2022vicreg} enforce similarity between augmented
positives and separation from negatives. Prior analyses highlight the role of
augmentation and invariance assumptions~\cite{tian2020makes, wang2022ssl}. Our
view is complementary: when positives are defined within augmentation-induced
orbits of $G$, training reshapes geometry but leaves the quotient structure
underdetermined unless external equivalence information is introduced.

\paragraph{Language and Multimodal Supervision.}
Recent multimodal systems demonstrate that coupling vision with language or
other external modalities improves semantic alignment and transfer
\cite{radford2021learning,tschannen2025siglip}. In our interpretation,
such signals act as an external semantic oracle that provides the
non-homeomorphic target required for quotient formation, offering a structural
explanation for their observed effectiveness.

\section*{Acknowledgments}

We are grateful to Prof. Hongdong Li for his pivotal comments on invariance. His perspective on equivalence classes helped resolve the characterization of the nuisance group $G$, effectively supplying the missing component of the proposed topological framework. We also extend our thanks to Dr. Jiashi Feng for his rigorous review and for verifying the logical soundness of the theoretical arguments.

\bibliography{main}
\bibliographystyle{icml2026}

\end{document}